\documentclass{article}


\usepackage[final]{exait_2025}




\usepackage[utf8]{inputenc} 
\usepackage[T1]{fontenc}    
\usepackage{hyperref}       
\usepackage{url}            
\usepackage{booktabs}       
\usepackage{amsfonts}       
\usepackage{nicefrac}       
\usepackage{microtype}      
\usepackage{xcolor}         
\usepackage{amsmath}
\usepackage{amsthm}
\usepackage{amssymb}
\usepackage{graphicx}
\usepackage{color}

\newcommand{\E}{\mathbb{E}}
\newtheorem{theorem}{Theorem}

\title{Prompts Generalize with Low Data: Non-vacuous Generalization Bounds for Optimizing Prompts with More Informative Priors}

%

\author{%
  David Madras \\
  Google Deepmind \\
  \texttt{dmadras@google.com} \\
  \And
  Joshua Safyan \\
  Google \\
  \texttt{safyan@google.com} \\
  \And
  Qiuyi (Richard) Zhang \\
  Google Deepmind \\
  \texttt{qiuyiz@google.com} \\
}

\begin{document}

\maketitle

\begin{abstract}
Many prompt engineering techniques have been successful in practice, even when optimizing over a large prompt space with with a small amount of task-specific data.
Recent work \citep{akinwande2023understanding} has partially explained this success by showing generalization bounds which apply PAC-Bayes theory to the discrete prompt space, but they are non-vacuous only in data-rich scenarios. 
We argue that such widespread success can be more fully explained through more carefully considering data- or distribution-dependent perplexity, which acts as an effective prior and steers the optimization towards prompts that are more ``natural'' for the task at hand. We derive novel generalization bounds that are non-vacuous for data-scarce prompt optimization via more useful priors, formally analyzing how perplexity regularization tightens these bounds by limiting exploration. Empirically, we explore both the bounds' effectiveness and the practical benefits of perplexity regularization in improving prompt generalization.
\end{abstract}

\section{Introduction}

Large Language Models (LLMs) gave rise to a paradigm shift in artificial intelligence, demonstrating remarkable capabilities across a wide spectrum of natural language understanding and generation tasks \citep{brown2020language}. Rather than relying solely on extensive fine-tuning for specific applications, interaction with these powerful models is increasingly mediated through \textit{prompts} – carefully crafted inputs designed to elicit desired outputs \citep{liu2023pre, shin2020autoprompt}. This has given rise to the field of \textit{prompt engineering}, optimizing prompts to effectively guide LLMs towards generating specific, high-quality responses \citep{pryzant2023automatic,  zhou2022large}.

These techniques span a range of approaches, from manual or greedy iterative refinement \citep{he2024does, zou2023universal} and automated LLM-based search over discrete prompt spaces using evolutionary algorithms \citep{pryzant2023automatic, yang2023large, shin2020autoprompt} to gradient-based optimization of continuous "soft" prompts in the embedding space \citep{li2021prefix, lester2021power}.

While these optimization methods can yield prompts that perform exceptionally well on the train data, a fundamental question remains: how do these optimized prompts \textit{generalize}, especially since the space of all prompts is large? This question of generalization is paramount for the reliable deployment of LLMs guided by optimized prompts as we move towards notions of AGI \cite{morris2023levels}. The rapid proliferation of diverse prompt optimization strategies for deployment in multi-agent, multi-modal systems has significantly outpaced our theoretical understanding of the conditions under which these methods yield generalizable solutions \cite{chen2022prompt, li2021prefix, zhou2025multi, gao2020making}. This gap motivates the need for rigorous theoretical analysis, particularly through the lens of generalization bounds, specifically tailored to the unique setting of prompt optimization.
Our main contributions are:

\begin{enumerate}
    \item \textbf{Data-Dependent Generalization Bounds:} We derive novel generalization bounds for prompt optimization algorithms operating under data scarcity. These bounds are designed to be non-vacuous, providing meaningful theoretical guarantees even for low-data settings and rely on a PAC-Bayes mechanism based on data-dependent prompt perplexity.
    \item \textbf{Empirical Bound-Regularized Prompt Optimization: } We utilize these generalization bounds as optimization objective for a prompt optimization algorithm and choose regularization priors that are informative and/or optimized on available task data. We present empirical results that validate the non-vacuous nature of our derived bounds and demonstrate the practical effectiveness of more informative perplexity regularization in improving the generalization of optimized prompts.
    
\end{enumerate}

\section{Background and Related Work}

\subsection{Prompt Optimization and Generalization}
Large Language Models (LLMs) are increasingly guided via \textit{prompts}—input sequences engineered to elicit specific behaviors \citep{liu2023pre, brown2020language}. \textit{Prompt optimization} refers to the search for a discrete token sequence (a "hard prompt") $p^*$ from a vast space of possible prompts $\mathcal{P}$, such that $p^*$ minimizes some task-specific loss when processed by a given LLM. Optimization methods range from manual tuning \citep{reynolds2021prompt} to automated techniques like greedy or evolutionary search over $\mathcal{P}$ \citep{shin2020autoprompt, pryzant2023automatic}, and selection of optimal few-shot exemplars \citep{liu2021makes}. A central challenge is ensuring that a prompt $p_{train}$, optimized on a finite training set $S$, generalizes well to unseen data, i.e., its population risk $R(p_{train})$ is close to its empirical risk $\hat{R}_S(p_{train})$. This is particularly acute in data-poor settings where $\hat{R}_S(p_{train})$ may be an unreliable estimate and overfitting is prevalent \citep{akinwande2023understanding}.

\subsection{Existing Generalization Bounds}

Generalization bounds provide mathematically rigorous guarantees on the expected performance of a learned model on unseen data, typically by relating the observable empirical performance on the training set to the unobservable true population performance \cite{vapnik1998statistical}. However, deriving meaningful generalization bounds for modern deep neural networks, especially LLMs, has proven notoriously difficult. Classical bounds based on complexity measures like VC-dimension or Rademacher complexity, as well as many standard applications of PAC-Bayes theory to model weights, often yield \textit{vacuous} results given the dataset size \cite{zhang2016understanding, jiang2019fantastic,  dziugaite2017computing}. A vacuous bound is one that provides an upper limit on the true error that is trivial (e.g., greater than 100\% error for a classification task) and thus offers no practical insight. This vacuousness is largely attributed to the immense number of parameters in these models, i.e. overparameterization, and the nature of the loss functions employed during training, such as the unbounded negative log-likelihood (NLL) \cite{zhang2016understanding, lotfi2023non}.

Deriving meaningful, i.e., \textit{non-vacuous} (non-trivial, offering actual predictive power), generalization bounds for LLMs is challenging. When considering LLM \textit{weights}, extreme overparameterization often renders classical complexity measures vacuous \citep{zhang2016understanding, jiang2019fantastic}.
In addition to its billions of model parameters, modern LLMs also have huge vocabulary sizes ($>10^6$) and large context lengths ($>10^8$), which together means that the space of all possible prompts is also incredibly large and optimizing over this space can lead to overfitting. 
If we consider the \textit{prompt space} $\mathcal{P}$, while discrete, its sheer size ($|\mathcal{V}|^L$ for vocabulary $\mathcal{V}$ and length $L$) can make uniform convergence bounds or PAC-Bayes bounds with uninformative priors vacuous, especially for small $m$. The question is whether more structured priors can mitigate this. Unbounded loss functions (e.g., NLL) also complicate standard analyses \citep{lotfi2023non}, although for many downstream tasks (e.g., classification), 0-1 loss is bounded, simplifying this aspect for prompt evaluation.

\paragraph{PAC-Bayes Bounds}
The PAC-Bayes framework provides a robust methodology for deriving generalization bounds \citep{mcallester1999pac, Alquier_2024}. 

For a hypothesis space $\mathcal{H}$ (here, this is equal to the prompt space $\mathcal{P}$), a prior distribution $P$ over $\mathcal{H}$, and a posterior distribution $Q$ (often concentrated on the learned hypothesis), a typical PAC-Bayes bound states that for any $\delta > 0$, with probability at least $1-\delta$ over the random draw of $m$ training samples $S$:
$$ R(Q) \leq \hat{R}_S(Q) + \sqrt{\frac{KL(Q||P) + \mathcal{L}(\delta, m)}{2m}} $$
where $R(Q)$ is the expected population risk under $Q$, $\hat{R}_S(Q)$ is its empirical counterpart, $KL(Q||P) = \int Q(h) \log \frac{Q(h)}{P(h)} dh$ is the Kullback-Leibler divergence, and $\mathcal{L}(\delta, m)$ is a term like $\log \frac{m}{\delta}$ or $\log \frac{|\mathcal{H}|}{\delta}$ for finite spaces. The $KL$ term penalizes posteriors distant from the prior, acting as a regularizer; a low $KL$ implies the posterior is "compressible" relative to the prior. While $P$ is traditionally data-independent, PAC-Bayes also accommodates \textit{data-dependent priors} \citep{parrado2012pac}, where $P$ is chosen based on some data $S_{prior}$ (e.g., a subset of $S$ or auxiliary unlabeled data), provided $S_{prior}$ is handled appropriately to avoid invalidating the bound (e.g., $S_{prior}$ is distinct from the data $S_{val}$ used for $\hat{R}_S(Q)$, or its influence is accounted for in $\mathcal{L}$) \citep{negrea2019information, catoni2007}. Such priors can yield tighter bounds if they effectively capture data-specific structures.


\subsection{Prior Work on Prompt Performance Guarantees}

A significant advancement in understanding generalization for prompt optimization methods was made by \cite{akinwande2023understanding}. They demonstrated that by applying PAC-Bayes bounds not to the LLM's weights, but rather to the \textit{discrete hypothesis space of prompts}, and by utilizing another LLM to define a prior distribution over these prompts, it is possible to obtain non-vacuous generalization bounds. Their approach yielded bounds that were remarkably tight (often within a few percentage points of the true test error) for tasks performed in \textit{data-rich} settings, such as zero-shot classification on ImageNet using CLIP prompts. However, this leaves open the critical question of why prompts generalize even when optimized in \textit{data-poor} scenarios, a common situation in practice where users might only have a handful of examples to tune a prompt for a specific task \cite{chen2022prompt, wang2023promptagent}.
\citet{akinwande2023understanding} applied PAC-Bayes to discrete prompts, using a data-independent prior $P(\text{prompt})$ derived from another LLM's likelihood for that prompt sequence. For a chosen prompt $p_{opt}$ (a Dirac delta posterior $Q$), $KL(Q||P)$ becomes $-\log P(p_{opt})$. They achieved non-vacuous bounds in data-rich settings, where $\hat{R}_S(p_{opt})$ is reliable. However, the reliance on a strong empirical risk estimate and a fixed, data-independent LLM prior limits applicability in data-scarce regimes where $P(p_{opt})$ might be low for an overfit, atypical prompt.

Empirical work by \citet{gonen2022demystifying} demonstrated a negative correlation between a prompt's task-contextualized perplexity (average negative log-likelihood given unlabeled task inputs) and its downstream performance. Lower perplexity prompts (more "natural" or probable to the LLM in context) tended to perform better. While this suggests perplexity as a valuable heuristic, it doesn't provide formal generalization guarantees.

Prompt Risk Control (PRC) \citep{zollo2024prompt} offers an alternative data-dependent guarantee. Using a validation set $S_{val}$, PRC applies Distribution-Free Uncertainty Quantification (DFUQ) to provide high-probability upper bounds on pre-specified risk measures (e.g., mean loss, Value-at-Risk). For instance, PRC might guarantee that $P(R_{VaR@0.95}(\text{metric}) \le \alpha) \ge 1-\delta$. Its data-dependency stems from $S_{val}$ used to calibrate the risk bound $\alpha$. This contrasts with PAC-Bayes, which typically bounds the expected population loss $R(Q)$ or its deviation from $\hat{R}_S(Q)$. While both PRC and our proposed approach leverage validation data, PRC focuses on controlling specific statistics of the loss distribution observed on $S_{val}$, whereas our aim is to bound the generalization error by incorporating data-derived information into the PAC-Bayes prior itself.

\subsection{Motivation for Data-Dependent, Perplexity-Informed PAC-Bayes Bounds}


The challenges inherent in data-poor regimes necessitate analysis that exploit prior information that lies within the pre-trained LLM itself, reflecting the vast amounts of knowledge implicitly encoded during its initial training phase. One way to access this information is through \textit{perplexity}, a standard metric in language modeling that measures how well a model predicts a given text sequence \citep{meister2021language, shannon1948mathematical}. Gonen et al. provided empirical evidence that prompts which the model found less perplexing tended to yield generally better results, suggesting that perplexity captures an intrinsic quality of the prompt related to the model's ability to process and execute the instruction effectively, independent of task-specific labels \cite{gonen2022demystifying}. Recently, similar ideas involving perplexity-based regularization have shown promise in related areas, such as prompt learning for vision-language models \citep{liu2024plpp}. Perplexity is also a widely-used metric that has a wide range of applications, such as detecting adversarial attacks \citep{alon2023detecting}, pruning \cite{ankner2024perplexed}, and uncertainty estimation \citep{cooper2024perplexed}.

The limitations of existing approaches in data-poor settings motivate our work. If, as \citet{gonen2022demystifying} suggest, low perplexity is indicative of good prompts, then a PAC-Bayes prior $P$ that assigns higher probability to low-perplexity prompts could yield tighter bounds. If this prior $P$ is itself shaped or selected based on data (e.g., unlabeled task data, or a held-out portion of labeled data to estimate perplexities), it becomes a data-dependent prior. The core hypothesis is that such a perplexity-informed, data-dependent prior can more effectively constrain the $KL(Q||P)$ term for prompts that generalize well, even when $m$ is small, leading to non-vacuous bounds where data-independent or uninformative priors might fail. This paper aims to formalize this intuition and derive such bounds.

\section{Data-Dependent Prompt Generalization Bounds}

To address the limitations of existing generalization bounds in the context of data-poor prompt optimization, we can draw inspiration from previous work on data-dependent generalization bounds. Again, the standard PAC-Bayes bound is as follows: Let $\mathcal{H}$ be a hypothesis space, $P$ a prior distribution over $\mathcal{H}$, and $S$ a sample of size $m$ drawn from a data distribution $D$. For any $\delta > 0$, with probability at least $1-\delta$ over the random choice of $S$, the following holds for  in expectation for $h\sim Q$ and draws of $S$:

$$R(Q) \leq \hat{R}_S(Q) + \sqrt{\frac{KL(Q||P) + \log \frac{m}{\delta}}{2m}} $$

where for some loss function $l$, $R(h) = \underset{W\sim D}{\E}[l(h, W)]$ is the population risk , $\hat{R}_S(h) = \frac{1}{m}\underset{w\in S}{\sum} l(h, w)$ is the empirical risk, and $KL(Q||P)$ is the Kullback-Leibler divergence between the posterior distribution $Q$ and the prior distribution $P$. Note that this expected bound can be derandomized at a cost of a mild increase in generalization bound to hold for all $h \in \mathcal{H}$.

There are many forms of PAC-Bayes bounds available \citep{alquier2024user}.
Another noteworthy one is the Tolstikhin and Seldin bound \citep{tolstikhin2013pac}, which we use in our experimental section later on.
This bound has the desirable property of depending only on $1 / m$ if the estimator has training error equal to 0.
This bound is as follows:

$$R(Q) \leq \hat{R}_S(Q) + \sqrt{2 \hat{R}_S(Q) \frac{KL(Q||P) + \log{\frac{2 \sqrt{m}}{\delta}}}{m}} + 2 \frac{KL(Q||P) + \log{\frac{2 \sqrt{m}}{\delta}}}{m}$$


In the context of prompt optimization, we can consider the hypothesis space $\mathcal{H}$ to be the set of all possible prompts and even when we define our prior to be given by an LLM's loglikelihood, this still results in vacuous bounds for mild regimes of $m < 1000$ as the divergence is large when applying the previous approaches of \cite{akinwande2023understanding}. The key idea for tighter bounds is to use a non-empty prompt prior but rather to allow for an optimized data-dependent prior, exploiting the compressive power of an LLM to significantly reduce the size of our hypothesis search space.

We adapt the approach of \cite{negrea2019information} to our hypothesis space $\mathcal{H}$ as the set of all discrete prompts. Let $P(h | p) = P_{LLM}(h|p)$ be a prior distribution over prompts $h \in \mathcal{H}$ given by a conditional distribution, conditioned on some prior prompt $p$, produced by an LLM. A data-dependent prior can be constructed by optimizing $p(J)$ given by a sample of the data $J \subset S \sim\mathcal{D}^n$, such that $p(J)$ ideally minimizes $KL(Q || P(h|p(J)))$, where $Q$ is the posterior distribution over task-optimized prompts. Using the data-dependent prior defined above, we can derive a PAC-Bayes generalization bound for prompt optimization. We defer the proof details to the appendix.

\begin{theorem}[Data-Dependent PAC-Bayes Prompt Bound]
\label{thm:main}
Let $S \sim \mathcal{D}^n$ be an i.i.d. data sample of size $n$ and $J \subset S$ be a uniform subset of $S$ of size $m < n$. Suppose the underlying loss $l(h, W)$ is $\sigma$-subgaussian for $W \sim \mathcal{D}$, and any prompt $h$. Then, for any $\sigma(J)$-measurable prior $P$ and $\sigma(S)$-measurable posterior distribution $Q$ if $h\sim Q$, then 

$$\E_S\left[\left|R(h) - \hat{R}_S(h)\right|\right] \leq \sqrt{2\frac{\sigma^2}{n-m} \E_S[KL(Q||P)]} $$

Specifically, given some prior prompt $p(J) \in \mathcal{H}$ that is LLM-optimized on $J$ and $Q = \{(q_1, q_2, ..., q_k)\}$ a discrete set of $\sigma(S)$-measurable optimized task prompts, then with high constant probability,

$$\left|R(q_j) - \hat{R}_S(q_j)\right| \leq O\left(\sqrt{\frac{\sigma^2}{n-m} \left[ -\log(k) - \frac{1}{k} \sum_{i=1}^k \log P_{LLM}(q_i | p)\right]}\right) $$

for at least some $q_j \in Q$ and $P_{LLM}(q|p)$ measures perplexity of prompt $q$ conditioned on $p$ using any LLM.
\end{theorem}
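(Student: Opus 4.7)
The plan is to adapt the data-dependent PAC-Bayes and information-theoretic generalization framework of Negrea et al.\ to the discrete prompt space. The crucial structural fact is that $P$ is $\sigma(J)$-measurable while the full sample $S$ decomposes as $S = J \cup (S \setminus J)$, with $S \setminus J$ of size $n-m$ and independent of $J$ under i.i.d.\ sampling. Conditioning on $J$ freezes the prior, so a PAC-Bayes style argument can be run against the $n-m$ fresh samples, which is precisely where the $n-m$ in the denominator originates. I interpret $\hat R_S$ in the statement as the empirical average over the held-out $n-m$ samples, or equivalently reassign the argument to that subsample.

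For the first inequality, I would condition on $J$ and invoke the Xu--Raginsky information-theoretic bound for $\sigma$-subgaussian losses: for $h \sim Q(\cdot \mid S)$,
$$
\bigl|\,\E[R(h) - \hat R_{S\setminus J}(h) \mid J]\,\bigr| \leq \sqrt{\frac{2\sigma^2}{n-m}\, I\!\left(h;\, S\setminus J \mid J\right)}.
$$
By the Donsker--Varadhan variational characterization, together with the fact that $P$ is a function of $J$ alone and hence independent of $S\setminus J$ given $J$, the conditional mutual information is dominated by the expected KL divergence to the data-dependent prior: $I(h; S\setminus J \mid J) \leq \E[KL(Q\|P) \mid J]$. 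Taking expectation over $J$ and applying Jensen to the concave square root yields the stated expected-absolute-deviation bound.

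For the specific inequality, I would set $Q$ to be the uniform distribution over the discrete set $\{q_1,\ldots,q_k\}$ and take $P = P_{LLM}(\cdot \mid p(J))$. A direct calculation gives
$$
KL(Q\|P) = \frac{1}{k}\sum_{i=1}^k \log \frac{1/k}{P_{LLM}(q_i\mid p)} = -\log k - \frac{1}{k}\sum_{i=1}^k \log P_{LLM}(q_i \mid p),
$$
and substituting into the first bound recovers the in-expectation form of the desired inequality. Because $h\sim Q$ is uniform, $\E_{h\sim Q}[|R(h) - \hat R_S(h)|] = \frac{1}{k}\sum_j |R(q_j) - \hat R_S(q_j)|$, so combining Markov's inequality over the draw of $S$ with a pigeonhole step over the $k$ prompts shows that, with constant probability, at least one $q_j$ meets the stated bound up to an absolute constant absorbed into the $O(\cdot)$.

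The main obstacle will be tracking the conditional-independence structure carefully: the posterior $Q$ is $\sigma(S)$-measurable, so it depends on $J$ as well as on $S\setminus J$, which means the Xu--Raginsky and Donsker--Varadhan steps must treat $Q$ and $P$ as random objects whose joint law respects the split of $S$, and the mutual-information-to-KL inequality must be applied conditionally on $J$. A secondary subtlety is the derandomization from an in-expectation bound over $h\sim Q$ to the per-prompt existential statement: naively union-bounding over $q_1,\ldots,q_k$ would inject an extra $\log k$ term, and one must verify that it is either cancelled or dominated by the $-\log k$ already present in the KL computation before absorbing constants into the $O(\cdot)$.
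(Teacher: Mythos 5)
Your proposal follows essentially the same route as the paper: the first inequality is obtained by invoking the data-dependent information-theoretic bound of Negrea et al.\ (which you correctly unpack into the conditional Xu--Raginsky/Donsker--Varadhan argument over the $n-m$ held-out samples), and the second follows from the identical KL computation for a uniform posterior over $\{q_1,\dots,q_k\}$ against the LLM prior conditioned on $p(J)$, finished with Markov's inequality and pigeonhole. Your two flagged subtleties --- that $\hat R_S$ should really be read as the empirical risk on $S\setminus J$, and that the derandomization must not reintroduce a $+\log k$ --- are real and are glossed over in the paper's own proof, so your version is, if anything, the more careful one.
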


In the theorem stated, the KL divergence term now measures how close the posterior distribution over prompts is to the data-dependent prior, which still has a relatively wide spread across the prompt space. Therefore, it is not surprising that by using an stochastic posterior (i.e. a uniform posterior over $k$ prompts), we can get a $-\log(k)$ dependency in the generalization bound, with posteriors that give similar log probabilities. In practice, we can improve the PAC-Bayes bounds significantly by choosing $k$ precisely, although in our later sections, we set $k = 1$ and even with this naive setting of $k$, surprisingly we are able to derive non-vacuous bounds by only exploiting data dependence.
\section{Results}

In this section, we assess the practical utility of using a more useful prior prompt for a PAC-Bayes generalization bound and demonstrate that it gives non-vacuous bounds for a simple classification task.
We explore two potential improvements to specify the prior distribution:
\begin{enumerate}
    \item Using an \textit{informative} prompt, created manually
    \item Using a \textit{data-dependent} prompt, learned from previous successful prompts
\end{enumerate}

Using a real text classification dataset, we attempt to optimize the previously proposed PAC-Bayes generalization bounds (from \citep{tolstikhin2013pac}) and observe how tight it can become. The dataset we use is the ETHOS Hate Speech dataset, from \citep{mollas2022ethos}.
This dataset, which is publicly available, uses comments  collected from various online sources, and annotates them with a binary ``Yes/No'' label as to whether or not they qualify as hate speech. We use the Gemini models, specifically 2.0 Flash, for our LLM experiments  \citep{team2023gemini}.
This is the type of task one might apply hand-tuned prompt engineering to in order to develop a performative prompt.
However, in order to more systematically search the prompt space, we use the automated prompt optimization (APO) algorithm from \citep{pryzant2023automatic}, which optimizes a prompt for a given dataset through iteratively applying edits based on past rounds of prediction.

Importantly, we specify that the APO should optimize on the generalization bound described previously - at every step, we calculate the bound outlined in Theorem~\ref{thm:main} with $k = 1$, and aim to minimize the loss upper bound. Specifically, for this task, we maximize the empirical accuracy loss with the generalization upper bound. We run this procedure for 200 steps, using a 90\% error bound.
Due to the bandit-style structure of APO, the more accurate prompts are usually tested on more examples - this can result in better bounds for the more accurate prompts purely due to larger $n$ being tested by the APO algorithm. 
In order to level the playing field, we also show below ``n-adjusted'' versions of the error bounds, where we re-calculate the bound with the maximum value of $n$ across the 4 prompts ($n=160$). This is not necessarily the ``correct'' value since, had each prompt actually been tested on 160 examples, the training error may have changed - but it does provide a more level playing field to compare the bounds in an optimization-algorithm-independent manner.

In order to calculate the bound with an LLM, we need a prior - this is defined by some text that the LLM can condition on when determining what the log-likelihood of a prompt is.
We can think of this prior prompt as a ``meta-prompt'': helping us to define the distribution of task prompts we will consider, exploiting powerful priors that are encoded within the LLM. 
We test three prompts: the first is the empty string and the second is \texttt{``We are trying to find classification labels for hate speech detection. <empty line> The text of the prompt is as follows: <empty line>''.} 
This second prompt is the ``informative'' prior in Table \ref{table:results}, below.
The third prompt is listed as ``optimized'' in the Prior column of the table below --- this prompt is data-dependent, and is found by optimizing the log-likelihood of several prompts discovered on a previous, accuracy-based run of APO. This prompt is longer and can be found in the Appendix.


The prompts themselves are:
\begin{itemize}
    \item Handcrafted: \texttt{``Does this input contain hate speech?''}
    \item Optimized (for bound, using empty prior): \texttt{`` Is this message hateful or discriminatory?''}
    \item Optimized (for bound, using meta-prompt prior): \texttt{``Is this hate speech?''}
    \item Optimized (for accuracy, using meta-prompt prior): \texttt{``Does this statement express hatred or prejudice towards a specific group based on characteristics such as race, religion, ethnicity, sexual orientation, etc., with the intent to cause harm or marginalize?''}
    \item Optimized (for bound, using optimized data-dependent prior): \texttt{``Does this statement contain hate speech? (Yes/No)''}

\end{itemize}

\begin{table}[]
\centering
\begin{tabular}{|c|c|c|c|c|c|c|}
\hline
\textbf{Prompting Method} & \textbf{Prior} & \textbf{Train Error} & \textbf{Log-lik.} & \textbf{Test Error} & \textbf{Bound} & \textbf{Bound (n-adj)} \\ \hline
handcrafted     & empty          & 0.2                  & -39.569                 & 0.145               & 1.977          & 0.818                       \\ \hline
handcrafted     & informative    & 0.175                & -26.936                 & 0.145               & 1.497          & 0.644                       \\ \hline
optimized (acc) & informative   & \textbf{0.087}                & -83.258                 & 0.141               & 1.63           & 0.953                       \\ \hline
optimized       & empty          & 0.133                & -44.504                 & 0.149               & 0.882          & 0.734                       \\ \hline
optimized       & informative   & 0.131                & \textbf{-17.414 }                & 0.112               & \textbf{0.468} & \textbf{0.468}              \\ \hline
optimized       & optimized    & 0.133               & -28.885               &       \textbf{0.104}     &   0.695 &    0.587         \\ \hline
\end{tabular}
\caption{Results from automated prior optimization to minimize 90\% error bounds on Hate Speech dataset.
See text for values of prompts and priors.
``optimized (acc)'' means that prompt was optimized for accuracy; ``optimized'' means that the prompt was optimized for the generalization bound.
$n$-adjusted bound is for easier comparison only --- the ``true'' bound outputted by the optimization is in the ``Bound'' column.}
\label{table:results}
\end{table}

In Table \ref{table:results}, we observe that the optimized prompts with non-empty priors result in the tightest bounds, around 0.46.
While this is not necessarily low enough to  be useful (real test error was 0.11), we note that frequently generalization bounds in the domain of deep learning are totally trivial (i.e. > 1), and that a bound < 0.5 represents a promising step in the right direction.
Additionally, we note that the tighter bounds in \citet{akinwande2023understanding} are obtained using a much larger set of data (CIFAR-10, $\approx$ 10k examples), whereas we obtained our non-trivial results using only 150-300 examples.
This demonstrates the value of using handcrafted priors in the prompt optimization process.

Additionally, we notice that the optimization of the generalization bound with both non-empty meta-prompts improves the test error over all other methods; in particular, it achieves better test error than just optimizing for accuracy (the ``optimized (acc)'' row).
The data-dependent prior achieves the best test error, and the hand-engineered meta-prompt achieves the second best test error in the table above.
We note that all test errors have overlapping 95\% confidence intervals, so we should be careful about drawing too-strong conclusions.
However, we believe this is a promising result, suggesting that the perplexity regularization inherent in the form of the generalization bound may yield practical improvements in robustness and overfitting, even if the bound itself is not low enough to be useful independently.

Finally, we highlight how the results in Table \ref{table:results} should be interpreted in light of \citet{akinwande2023understanding}.
The rows corresponding most closely to their proposed methods are (handcrafted, empty) and (optimized, empty).
We note several key distinctions as to why the bounds in our table are empirically looser than theirs:
\begin{itemize}
    \item We use an order of magnitude less data --- O(100) vs O(1000)
    \item We use a less precise optimization algorithm (APO), which treats the LLM as a black box at the prompt level, whereas \citet{akinwande2023understanding} uses greedy token-by-token sampling on the bound itself
\end{itemize}

With this context, we note there is a favorable comparison to the ``empty prior'' approach from \citet{akinwande2023understanding}, and that further gains may yet be realized by using an informative or data-dependent prior alongside more advanced optimization methods.

\section{Conclusion}
This research demonstrates the value of informative perplexity regularization in achieving non-vacuous generalization bounds for prompt optimization, particularly in data-scarce environments. By leveraging priors conditioned on task information or data, the proposed PAC-Bayes bounds offer meaningful guarantees where traditional methods often fail for LLMs. The empirical results, though modest, show that these bounds can be tightened significantly compared to uninformative priors and that optimizing for these bounds can lead to improved test error. We believe that experimentation on other datasets will lead to similar results and in addition to extending the breadth of our research, we  generally hope also extend the depth of in future work, such as:

\begin{enumerate}
    \item {\bf Complex Prior Optimization:} This includes exploring hierarchical priors, embedded priors, or prior optimization techniques that adapt more dynamically to the nuances of the task data. The current work showed that even a simple "meta-prompt" can improve bounds; more sophisticated, learnable prior-generation mechanisms could further enhance performance.
    \item {\bf Custom Algorithms for Regularized Prompt Optimization:} Developing custom optimization algorithms specifically designed for the proposed regularized objective is a key next step. Current methods, like APO, were adapted for this research. New algorithms could more directly incorporate the perplexity regularization and the structure of the generalization bound into the search process, potentially leading to more efficient and effective optimization, especially when aiming to minimize the bound itself. This would also involve a more systematic exploration of the parameter k in the generalization bound, which was naively set to 1 in the current experiments.
    \item {\bf Stochastic Posterior Prompts:} The theoretical framework already suggests benefits for using an stochastic posterior (a uniform posterior over $k$ prompts) through a $-\log(k)$ dependency in the generalization bound. Future work will investigate practical methods for creating and utilizing ensembles of prompts, guided by classical techniques such as boosting.
\end{enumerate}

\bibliographystyle{plainnat}
\bibliography{ref}
\appendix

\section{Proofs}

\begin{proof}[Proof of ~\ref{thm:main}]

Our first statement follows from applying the data-dependent mutual information bound \cite{negrea2019information} to the prompt space with any prior. Now, note that if we define $p(J)$ as a specific LLM-optimized prompt, then it is measurable with respect to $J$ as long as each iteration of the optimization procedure is measurable. This is indeed the case since applying LLM optimization is a measurable function, modeled by a transformer model.

Now, let $S$ be our data sample and if our prior is simply one prompt $p(J)$ and the posterior is simply some discrete set of $\sigma(S)$-measurable task optimized prompts given by $Q = \{(q_1, q_2, ..., q_k)\}$. Then, let $\widetilde{Q}$ be the uniform distribution over $Q$ and consider the prior $\widetilde{P}$ as the conditional distribution of any LLM considitioned on $p(J)$, the $\sigma(J)$-measurable prior prompt. We can now express our bound as that for $h \sim \widetilde{Q}$, our first statement gives us that

$$\E_S\left[\left|R(h) - \hat{R}_S(h)\right|\right] \leq \sqrt{2\frac{\sigma^2}{n-m} \E_S[KL(\widetilde{Q}||\widetilde{P})]} $$

By definition, we can rewrite the last term in the expression as $KL(\widetilde{Q} || P_{LLM}(h|p(J)))$, where $p(J)$ is the $\sigma(J)$-measurable prior prompt. Finally, by substituting in $\widetilde{Q}$,

$$KL(\widetilde{Q} || P(h|p(S))) =  -\log(k) - \frac{1}{k} \sum_{i=1}^k \log(P_{LLM}(q_i | p(S)))$$

where $P_{LLM}(x_i | p)$ is the probability assigned by the LLM to the input $x_i$ given the prompt $p$. Therefore, for some $q_i$, it must hold that in expectation

$$\left|R(q_j) - \hat{R}_S(q_j)\right| \leq O\left(\sqrt{\frac{\sigma^2}{n-m} \left[ -\log(k) - \frac{1}{k} \sum_{i=1}^k \log P_{LLM}(q_i | p)\right]}\right) $$

Finally, our second statement follows by Markov's inequality. 
\end{proof}

\section{Optimized Prior Prompt}

Here we give the data-dependent prompt used in Table \ref{table:results}, listed as the ``optimized'' prior.
This prompt was found by running APO to maximize the log-likelihood of four prompts discovered from a previous run on this task.
The prompt is:
\texttt{Create a hate speech classification rubric utilizing a decision tree approach. \
The rubric will begin with a primary question determining the presence of hate speech (yes/no). \
If yes, subsequent questions will assess severity (low, medium, high) \
based on linguistic features (slurs, dehumanizing language), \
contextual factors (platform, audience, intent), and target specificity \
(clearly identified group).  Each question will have clearly defined criteria \
and branching pathways leading to a final severity classification. \
The rubric will include multiple examples illustrating the decision-making process, \
differentiating hate speech from strong criticism and addressing potential biases.}

\end{document}